\documentclass{article}

\usepackage[accepted]{icml2026}

\usepackage[utf8]{inputenc}
\usepackage[T1]{fontenc}
\usepackage{microtype}

\usepackage{amsmath}
\usepackage{amssymb}
\usepackage{amsfonts}
\usepackage{mathtools}
\usepackage{amsthm}
\usepackage{nicefrac}

\usepackage{graphicx}
\usepackage{subcaption}
\usepackage{booktabs}
\usepackage{adjustbox}
\usepackage{rotating}
\usepackage{pdflscape}
\usepackage{makecell}
\usepackage{threeparttable}

 \usepackage{algorithm}
 
\usepackage{tikz}
\usetikzlibrary{decorations.pathreplacing, calc, positioning, arrows.meta}
\usepackage{pgfplots}
\pgfplotsset{compat=1.18}

\usepackage{xcolor}
\usepackage{fontawesome5}

\usepackage{hyperref}
\usepackage[capitalize,noabbrev]{cleveref}

\usepackage{placeins}
\usepackage[textsize=tiny]{todonotes}

\theoremstyle{plain}
\newtheorem{theorem}{Theorem}[section]

\newtheorem{lemma}[theorem]{Lemma}

\theoremstyle{definition}
\newtheorem{definition}[theorem]{Definition}

\theoremstyle{remark}

\newcommand{\R}{\mathbb{R}}
\newcommand{\Rp}{\mathbb{R}_{>0}}

\definecolor{accept}{HTML}{1B7F3B}
\definecolor{reject}{HTML}{B00020}
\newcommand{\cmark}{\textcolor{accept}{\ensuremath{\checkmark}}}
\newcommand{\xmark}{\textcolor{reject}{\ensuremath{\boldsymbol{\times}}}}
\newcommand{\hlclause}[1]{\colorbox{red!15}{#1}}

\newcommand{\examplebrace}[2]{%
  \noindent
  \begin{tikzpicture}
    \node[inner sep=0pt, anchor=north west] (content) {%
      \begin{minipage}[t]{0.68\linewidth}
        #2
      \end{minipage}
    };
    \draw[decorate, decoration={brace, amplitude=4pt, raise=2pt}, line width=0.4pt]
      ($(content.north east)+(2pt,0)$) -- ($(content.south east)+(2pt,0)$)
      node[midway, xshift=12pt, anchor=west, align=left, font=\itshape\footnotesize] {#1};
  \end{tikzpicture}
  \par\vspace{0.8em}
}

\icmltitlerunning{LemmaBench: A Live, Research-Level Benchmark to Evaluate LLM Capabilities in Mathematics}

\begin{document}

\twocolumn[
  \icmltitle{LemmaBench: \\ A Live, Research-Level Benchmark to Evaluate LLM Capabilities in Mathematics}

  \icmlsetsymbol{equal}{*}

  \begin{icmlauthorlist}
    \icmlauthor{Antoine Peyronnet}{equal,yyy,ppp}
    \icmlauthor{Fabian Gloeckle}{equal,ppp}
    \icmlauthor{Amaury Hayat}{equal,ppp,ooo}
  \end{icmlauthorlist}

  \icmlaffiliation{yyy}{Ecole Normale Superieure de Rennes}
  \icmlaffiliation{ppp}{Ecole des Ponts Paris}
  \icmlaffiliation{ooo}{Korean Institute for Advanced Study}

  \icmlcorrespondingauthor{Antoine Peyronnet}{antoine.peyronnet@ens-rennes.fr}

  \icmlkeywords{Machine Learning, AI for Mathematics, Benchmarking, Theorem Proving}

  \vskip 0.3in
]

\printAffiliationsAndNotice{}

\begin{abstract}
We present a new approach for benchmarking Large Language Model (LLM) capabilities on research-level mathematics. Existing benchmarks largely rely on static, hand-curated sets of contest or textbook-style problems as proxies for mathematical research. Instead, we establish an updatable benchmark evaluating models directly on the latest research results in mathematics. This consists of an automatic pipeline that extracts lemmas from the arXiv and rewrites them into self-contained statements by making all assumptions and required definitions explicit. It results in a benchmark that can be updated regularly with new problems taken directly from human mathematical research, while previous instances can be used for training without compromising future evaluations. We benchmark current state-of-the-art LLMs, which went from 10-15$\%$ accuracy to 40.8\% accuracy in theorem proving (pass@1) depending on the model, showing that there is both a very fast evolution of LLMs capabilities with respect to human research-level mathematics and still a large margin of progression for general public LLMs to reach human-level proving capabilities in a research context.
\end{abstract}

\section{Introduction}

Deep learning and machine learning methods have enjoyed tremendous success over the past decade,  culminating in Nobel Prizes in physics and chemistry in 2024. It has become increasingly clear that AI methods are changing the practice of science, sometimes to the surprise of scientists and researchers in these disciplines. While machine learning methods are already useful in mathematics \citep{davies2021advancing,charton2024patternboost,alfarano2024global}, theorem proving remains a major open challenge. 

The exceptional progress made by Large Language Models (LLMs) 
in recent months and years offers a glimpse into a future where models could be capable of proving theorems autonomously at a human level. The performances of LLMs in this area have exploded over the past two years, mainly thanks to two elements: first, the use of end-to-end reinforcement learning from verifiable rewards (RLVR), which has led to the advent of  so-called \emph{reasoning models} \cite{guo2025deepseek,rastogi2025magistral}; second, a major effort to gather and generate large mathematical data corpora, particularly for Olympiad-style exercises whether in natural language \cite{gao2024omni,numina_math_datasets,fang2025mathodyssey} or in formal languages \cite{ahm2025leanuniverse,li2024numinamath,ying2024lean}. All of this has led to several achievements: announcements of mathematical models capable of winning a gold medal at the International Mathematical Olympiads 2025 \cite{huang2025gemini,harmonic2025aristotle,castelvecchi2025ai} the arrival of reasoning models that prove to be highly efficient in mathematical symbolic computations, capable of performing in a few minutes the equivalent of several hours of human work \cite{diez2025mathematical}; and the resolution of several open problems with the help of LLM-based automated theorem provers \cite{chen2026fel,sothanaphan2026resolution,liu2026numina}, specialized agent tools \cite{dixit2026project} or even generic LLMs \cite{schmitt2025extremal,feng2026semi,jang2025point}.
While such strong performance on research-level mathematics has been claimed \cite{frontiermath}, the true capabilities are still debated.

While current models perform very well in calculations and in undergrad curriculum mathematics, two questions arise: 
\begin{itemize}
\item \textbf{Data contamination}: how much of the performances of such models are tainted by possible contamination of the training data?  
\item \textbf{Generalization and performance in research-level mathematics}: how can these models, often post-trained on curated or competition-style math datasets, perform on specific areas of research-level mathematics where, by definition, data is scarce? 
\end{itemize}

This paper proposes another approach: relying on mathematicians' production to establish a research-level benchmark that can be updated as often as needed.
Our pipeline, inspired by LiveCodeBench \citep{jainlivecodebench} in the domain of competitive coding, works as follows: it extracts lemmas from the latest research preprints on the arXiv and automatically gathers the required assumptions and definitions from elsewhere in the paper, thus making them self-contained statements that can be used as stand-alone benchmark elements.
The advantages of such an automatically created benchmark are as follows:
\begin{itemize}
\item[\textbf{(i)}] It can be updated regularly at low cost (up to weekly if needed), allowing contamination-free evaluation at all times; Human evaluation is only needed to assess the confidence score of the LLM as a judge and can be updated much less frequently (essentially when the judge is changed).
\item[\textbf{(ii)}] The same pipeline can be run on historical data, allowing the creation of large high-quality training datasets;
\item[\textbf{(iii)}] Domain composition can be steered freely, and domain-specific benchmarks can be created based on the categorization of research preprints on the arXiv;
\item[\textbf{(iv)}] The data is perfectly in-distribution for actual research-level lemma proving by definition of the data source.
\end{itemize}

Our \textbf{contributions} can be summarized as follows:
\begin{itemize}
    \item We describe this \textbf{pipeline} for generating a live high-quality benchmark on lemma-proving in research-level mathematics from preprints on the arXiv, taking care to ensure \emph{self-containedness} of the resulting statements which has been identified as a key limitation of data from the arXiv \cite{patel2023new,alexander2026semantic} 
    (Section~\ref{sec:extract}).
    \item We \textbf{evaluate the quality} of the resulting self-contained lemma benchmark with manual inspection, indicating high precision in producing exact and correct mathematical statements (Section~\ref{ssec: self-containement test}).
    \item Using the benchmark, we evaluate the \textbf{lemma-proving accuracy} of current state-of-the-art LLMs. Two evaluations are considered: the proofs provided are first evaluated with an LLM as a judge. A subset of the proofs are then judged by human expert mathematicians so as to provide a confidence score to the LLM as a judge evaluation.
    (Section~\ref{sec:gen-judge}). The main results are provided in Tables \ref{tab:results-compact} and \ref{tab:results-compact-2026}. 
\end{itemize}

Instead of evaluating the theoretical upper bound of LLM performance in theorem proving, this paper asks a more practical question: how much of today's research-level mathematical theorem proving can already be performed using widely available, general-purpose LLMs, the tools most mathematicians are realistically expected to use in the near future—rather than specialized or advanced systems available only to a small number of researchers. We find that the capabilities of these general-purpose models have progressed rapidly over the past nine months, while a substantial gap to research-level proficiency remains.

\begin{figure}[h!]
    \centering
    \includegraphics[width=0.8\linewidth]{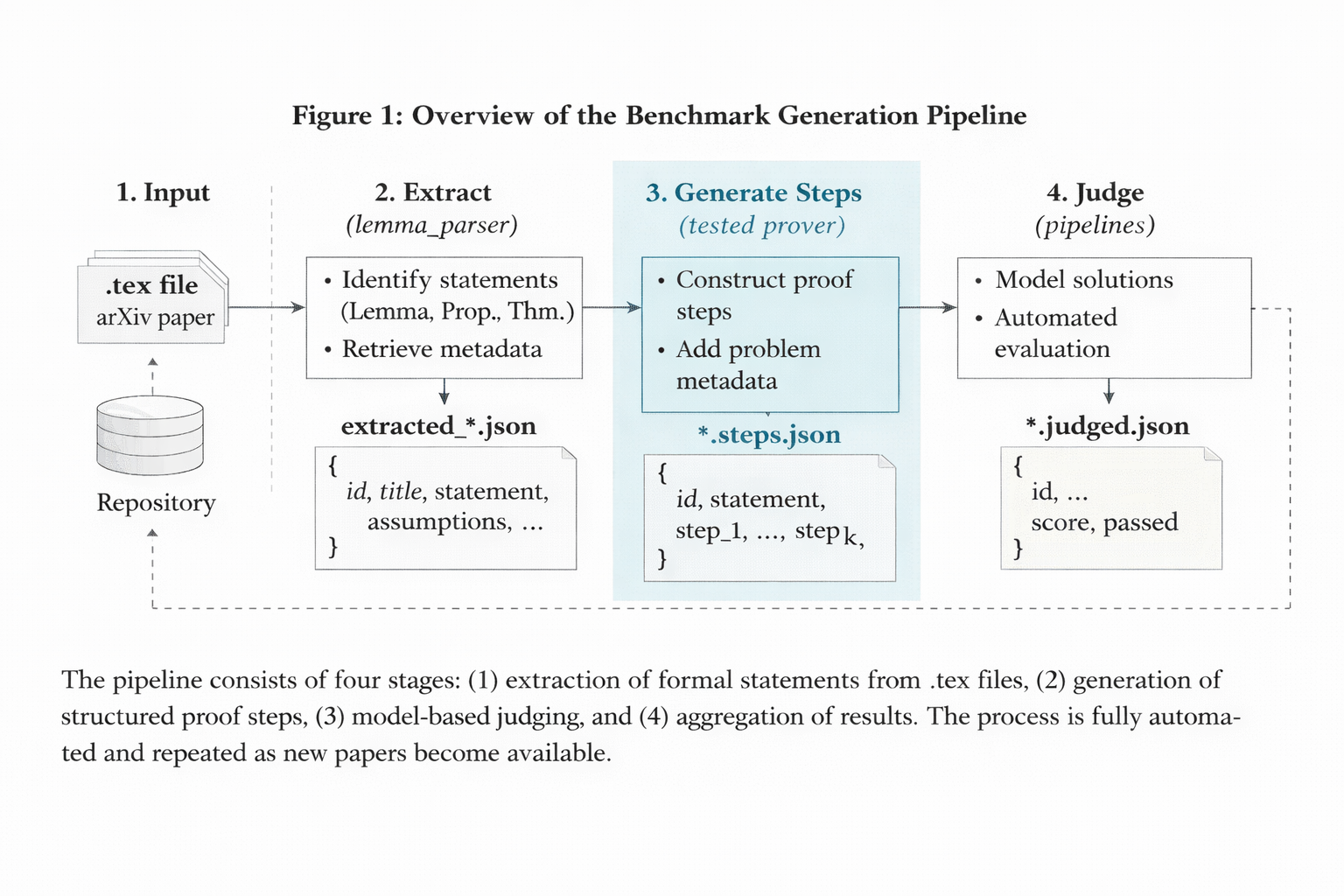}
    \caption{\textbf{Pipeline at a glance}: the pipeline retrieves papers from the arXiv, extract statements and make them self-contained. An external prover can then try to prove them step by step and each proof is inspected by a judge. The final score is the proportion of valid proofs. We used GPT-5, Gemini 2.5 Pro, Gemini 3 Pro, Claude Opus 4.5, and DeepSeek-R1 as a baseline for provers.}
    \label{fig:placeholder}
\end{figure}

\section{Related Work}
Early mathematical reasoning benchmarks such as GSM8K \citep{cobbe2021training},
MathQA \citep{amini2019mathqa} and MATH \citep{hendrycks2021measuring} target school- to
undergraduate-level problems and are now largely saturated and exposed to
contamination \citep{xu2024benchmarking}. A second line of work relies on expert-curated,
often closed problem sets to push difficulty and limit leakage: FrontierMath
\citep{frontiermath} keeps its problems closed-source, Soohak \citep{son2026soohak}
compiles over 400 problems from 64 mathematicians, First Proof
\citep{abouzaid2026first} releases ten expert questions for a single one-week
evaluation, and the Leipzig Tier-4 set \citep{leipzig_benchmarks_2026} offers 102 open
research-level problems. These approaches are valuable but costly to scale and
update.

Closest to ours is the emerging family of \emph{live}, arXiv-derived benchmarks.
RealMath \citep{zhang2025realmath} and MathArena/ArxivMath \citep{arxivmath2026} extract
problems from recent papers but restrict themselves to statements with
automatically verifiable answers; EternalMath \citep{ma2026eternalmath} similarly targets
code-verifiable questions. LemmaBench builds on this direction but differs in a
key respect: rather than limiting the benchmark to verifiable final answers, we
evaluate \emph{natural-language proofs} of research lemmas. This is harder to
score, but closer to how mathematics is actually written, and it yields
substantially more problems per unit of arXiv input (hundreds of lemmas per
arXiv-week versus tens of verifiable problems per month). Our contribution is
therefore not arXiv-based benchmarking in general, but self-contained arXiv
\emph{lemma proof} generation and its natural-language evaluation. Formal-language
benchmarks such as miniF2F \citep{zheng2021minif2f}, FIMO \citep{liu2023fimo},
Lean-Workbook \citep{ying2024lean} and PutnamBench \citep{tsoukalas2024putnambench}
address a complementary, mostly pre-research setting. An extended discussion is
given in Appendix \ref{app:relatedworks}.

\section{Problem Definition: Context Dependence in Extracted Theorems}

Theorem statements in research preprints do not exist within a vacuum but tightly depend upon the definitions, objects and choices introduced previously in the paper. This raises the following question for a research-level mathematics benchmark: make long-context retrieval and contextual understanding part of an \emph{end-to-end task}, or restrict the measured capability clearly to research-level \emph{theorem proving}?

In this work, we opt for the second choice of removing context dependence by retrieving definitions and making statements self-contained. This allows us to provide a more targeted measure for theorem proving for which few metrics exist (for long-context understanding tasks in natural language processing, on the other hand, see for instance \citep{bai2024longbench}), facilitates adoption and, at the same time, makes the pipeline useful for reinforcement learning prompt dataset construction.

What constitutes a \emph{self-contained} theorem? In this work, we treat common undergraduate notions (e.g. the fields of rational, real and complex numbers, rings, modules and vector spaces, continuity, differentiability) as a known background corpus, and require notions and objects specific to the research conducted in the preprint to be retrieved. 
To give examples, the following lemmas are deemed \emph{self-contained}, \emph{not self-contained} and \emph{self-contained given the retrieved context}.

\examplebrace{Example 1\\(self-contained)}{
\begin{lemma}[\citep{de2026objective}, Lemma 13]
  Suppose that
  \begin{equation*}
       a u \leq b + c \log(u),
  \end{equation*}
  for some $a,c>0$, a scalar $b \in\R$ and $u\ge1$. Then
  \begin{equation*}
      u \leq \max \left\{e^{b/c}, \frac{4c^2}{a^2} \right \}.
  \end{equation*}
\end{lemma}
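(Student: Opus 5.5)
We argue by contradiction with a case split driven by the two terms in the maximum. Suppose $u > \max\{e^{b/c}, 4c^2/a^2\}$; we aim to show this contradicts the hypothesis $a u \le b + c\log(u)$.

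\textbf{Step 1: absorb $b$ into the logarithm.} Since $u > e^{b/c}$ and $c>0$, we have $c\log(u) > b$. The hypothesis then gives
\begin{equation*}
a u \le b + c\log(u) < 2c\log(u).
\end{equation*}
This uses only the first term of the maximum. If $b \le 0$, this term is at most $1 \le u$, so it imposes no restriction, consistent with the statement.

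\textbf{Step 2: bound the logarithm by a square root.} We use the elementary inequality $\log(u) \le \sqrt{u}$ for $u \ge 1$. One way to see it: apply $\log x \le x/e$ (valid for $x>0$) with $x=\sqrt{u}$. This gives $\log(u) = 2\log\sqrt{u} \le 2\sqrt{u}/e < \sqrt{u}$. Alternatively, one checks that $\sqrt{u}-\log(u)$ is positive at $u=1$ and has positive derivative except on an interval where its minimum, at $u=4$, is $2-\log 4>0$.

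Combining with Step 1 yields
\begin{equation*}
a u < 2c\sqrt{u}.
\end{equation*}

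\textbf{Step 3: conclude.} Dividing by $a\sqrt{u}>0$ gives $\sqrt{u} < 2c/a$, that is, $u < 4c^2/a^2$. This contradicts the assumption $u > 4c^2/a^2$. Hence $u \le \max\{e^{b/c}, 4c^2/a^2\}$.

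The only step requiring any care is Step 2. We need a sublinear bound on $\log(u)$ with constant exactly $1$ in front of $\sqrt{u}$, so that the constant $4$ in $4c^2/a^2$ comes out. The estimate $\log x \le x/e$ provides this with room to spare. Everything else is direct algebra using $a,c>0$ and $u\ge 1$.
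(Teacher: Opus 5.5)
Your proof is correct and complete. If $u>\max\{e^{b/c},4c^2/a^2\}$, then $c\log u>b$ gives $au<2c\log u$. The bound $\log u=2\log\sqrt u\le 2\sqrt u/e<\sqrt u$, obtained from $\log x\le x/e$, then gives $au<2c\sqrt u$, hence $u<4c^2/a^2$, which is the contradiction you want.

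There is no in-paper argument to compare against: the paper never proves this lemma. It only quotes it from \citep{de2026objective} as Example~1 of a statement that is already self-contained. Your contrapositive is the natural direct proof rephrased: either $c\log u\le b$, which gives $u\le e^{b/c}$, or $b<c\log u$, which gives $u<4c^2/a^2$.

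Two minor remarks on Step~2, neither affecting validity:
\begin{itemize}
\item The alternative justification is worded loosely. Since $\frac{d}{du}(\sqrt u-\log u)=(\sqrt u-2)/(2u)$, the function decreases on $(0,4)$ and increases on $(4,\infty)$, with minimum $2-\log 4>0$ at $u=4$. This is not ``positive derivative except on an interval.''
\item Neither argument actually uses $u\ge 1$, since $\log u<\sqrt u$ holds for all $u>0$.
\end{itemize}
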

}

\examplebrace{\hspace{0pt}Example 2\\\hspace{0pt}(not self-contained)}{
\begin{lemma}[\citep{washburn2026uniqueness}, Lemma 2.1]
\leavevmode\\
  If \(F\) is reciprocal, then \(G\) and \(H\) are even.
\end{lemma}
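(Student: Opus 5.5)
Since the statement is not self-contained, I will first fix the most plausible reading of the surrounding definitions and then give the argument. I assume $F$ is a function (for instance a polynomial or entire function) and that $G$ and $H$ are the real-valued functions obtained by restricting $F$ to its line of symmetry:
\begin{equation*}
F(\tfrac12 + it) = G(t) + i\,H(t), \qquad t \in \R.
\end{equation*}
I read ``reciprocal'' as invariance under the involution fixing that line, i.e.\ $F(s) = F(1-s)$. In the polynomial picture this is $z^{n}F(1/z) = F(z)$, and restricted to the unit circle $z = e^{i\theta}$ it becomes the same reflection. The plan is to show that reciprocity is exactly the reflection $t \mapsto -t$ on the parametrising line, and then separate real and imaginary parts.

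First, I would verify that the involution defining reciprocity maps the point with parameter $t$ to the point with parameter $-t$. Indeed, $1 - (\tfrac12 + it) = \tfrac12 - it$, and analogously $1/e^{i\theta} = e^{-i\theta}$. Applying the reciprocity hypothesis at $s = \tfrac12 + it$ then gives
\begin{equation*}
G(-t) + i\,H(-t) = F(\tfrac12 - it) = F(\tfrac12 + it) = G(t) + i\,H(t)
\end{equation*}
for all real $t$. Second, since $G$ and $H$ are real-valued by construction, comparing real and imaginary parts yields $G(-t) = G(t)$ and $H(-t) = H(t)$, which is the claim.

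In the polynomial variant there is one extra step. If reciprocity carries the normalising factor $z^{n}$, I would first pass to the normalised function $z^{-n/2}F(z)$, so that the invariance becomes exact with no leftover phase $e^{in\theta}$. I would then define $G$ and $H$ from this normalised function and run the same two steps.

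The main obstacle is not the computation but matching the hidden context. One must check that the paper's definitions of $G$ and $H$ really are real-valued, and that its definition of ``reciprocal'' carries no extra sign or phase factor. If instead the convention were $F(1-s) = \overline{F(s)}$, the same comparison would show that $G$ is even and $H$ is odd, so the statement would need adjusting. I would therefore first pin down these definitions from earlier in the source paper and only then execute the short argument above.
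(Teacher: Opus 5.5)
Your argument is valid for the objects you guessed, but they are not the objects of this lemma, so as written it proves a different statement. The paper supplies the missing context in Example~3 (Definition~\ref{def:context}):
\begin{itemize}
\item $F\colon\Rp\to\R$ is reciprocal if $F(x)=F(x^{-1})$ for all $x>0$;
\item $G(t):=F(e^t)$;
\item $H(t):=G(t)+1$.
\end{itemize}
There is no critical line $\frac12+it$, no complex-valued $F$ and no phase normalisation. Also, $H$ is not an imaginary part; it is $G$ shifted by a constant. So your second step, splitting $F(\frac12\pm it)$ into real and imaginary parts, has nothing to act on. Your caveat about conventions such as $F(1-s)=\overline{F(s)}$ is moot. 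And your framework gives the wrong reason for $H$: here $H$ is even because $G$ is, not because of a separate component identity. Your polynomial variant comes closest, since $z\mapsto 1/z$ is the right involution. But the relevant curve is the positive half-line $x=e^t$, not the unit circle $e^{i\theta}$, and there is no $z^n$ factor to remove.

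What survives is the mechanism in your first step: reciprocity is the pullback of $t\mapsto -t$ under the parametrisation. Here the parametrisation is $t\mapsto e^t$ from $\R$ onto $\Rp$, and $e^{-t}=(e^t)^{-1}$. So for every $t\in\R$, applying the hypothesis at $x=e^t>0$,
\[
G(-t)=F(e^{-t})=F\bigl((e^{t})^{-1}\bigr)=F(e^{t})=G(t),
\qquad
H(-t)=G(-t)+1=G(t)+1=H(t).
\]
That is the whole proof, and the normalisation $F(1)=0$ is not needed. You were right that the real issue is pinning down the hidden definitions; the paper uses this lemma (Example~2 versus Example~3) to make exactly that point. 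But the guessed definitions are not the paper's, so the proposal does not establish the lemma in its actual setting.
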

}

\examplebrace{\hspace{0pt}Example 3\\\hspace{0pt}(self-contained \\ \hspace{0pt} given \\ retrieved context)}
{\begin{definition} \label{def:context}
Consider the following retrieved context definitions.
  \begin{itemize}
    \item Reciprocal cost: $F:\Rp\to\R$ with $F(x)=F(x^{-1})$\\ for all $x>0$;
          it is normalized if $F(1)=0$.
    \item $G:\R\to\R$ defined by $G(t):=F(e^t)$.
    \item $H:\R\to\R$ defined by $H(t):=G(t)+1=F(e^t)+1$.
  \end{itemize}
\end{definition}
\begin{lemma}[\citep{washburn2026uniqueness}, Lemma 2.1]
\leavevmode\\
  If \(F\) is reciprocal, then \(G\) and \(H\) are even.
\end{lemma}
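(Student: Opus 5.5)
The plan is to verify evenness directly from the definitions recalled in Definition~\ref{def:context}. No earlier result is needed beyond the reciprocity identity $F(x)=F(x^{-1})$ for all $x>0$. Since $H$ is defined from $G$ by adding a constant, I will first establish that $G$ is even and then deduce the claim for $H$.

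For $G$, fix an arbitrary $t\in\R$. Since the exponential maps $\R$ into $\Rp$, both $e^t$ and $e^{-t}$ lie in the domain of $F$, so $G(t)$ and $G(-t)$ are well defined. Applying the definition of $G$ and then reciprocity with $x=e^t$, I will write
\begin{equation*}
G(-t)=F(e^{-t})=F\bigl((e^{t})^{-1}\bigr)=F(e^{t})=G(t).
\end{equation*}
Since $t$ was arbitrary, this shows that $G$ is even.

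For $H$, I will use $H(t)=G(t)+1$ and the evenness of $G$ just proved:
\begin{equation*}
H(-t)=G(-t)+1=G(t)+1=H(t)\quad\text{for all } t\in\R,
\end{equation*}
so $H$ is even as well. The normalization $F(1)=0$ plays no role in this argument, and the statement holds without it.

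There is no real obstacle here. The only point needing care is the identity $e^{-t}=(e^t)^{-1}$, together with the fact that $e^t>0$, which guarantees that reciprocity may be applied at $x=e^t$. Both facts are elementary properties of the exponential function.
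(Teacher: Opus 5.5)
Your proof is correct and complete: taking $x=e^t$ in the reciprocity identity gives $G(-t)=G(t)$, and evenness of $H=G+1$ follows at once; you are also right that the normalization $F(1)=0$ is not needed. The paper gives no proof of this lemma, since it quotes the statement only to show how a lemma becomes self-contained once the context definitions are retrieved, and your direct computation is the natural argument for it.
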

}

Another example of a self-contained lemma retrieved by GPT-5 is given in \ref{Lemma: Hilbert Schmidt operator} in Appendix \ref{app:examples}

\section{Method}
\label{sec:method}
\subsection{Data extraction}
\label{sec:extract}

While the research articles produced by mathematicians and stored in the arXiv represent a large source of research-level theorems, these theorems are often not self-contained:  some assumptions needed to prove the statements may be scattered in the rest of the paper and only implicitly assumed in the actual statement. This is a notable reason why autoformalization of arXiv papers is challenging \cite{becker2025blueprint}, just as their use in a theorem proving benchmark. To tackle this, we design a language-model based pipeline to identify the assumptions missing or implicitly assumed for each statement and extract them from the articles to make the statements self-contained.

Because our goal is to retrieve lemmas, 
the pipeline applies first a regex-based extractor to the LaTeX source and captures various lemma environments\footnote{From the second iteration of the benchmark in Feb. 2026, we also discard all lemmas that bear a reference, indicating that they are already known in the literature and only recalled in the paper.}
We focus on lemmas because they are typically more granular and, on average, more tractable for current LLM provers; extending to other LaTeX theorem environments is, of course, straightforward.
\newpage
To obtain the definitions and assumptions needed to make a given statement self-contained, we investigate two complementary modes:

\vspace{-\baselineskip}
\paragraph{Full-context retrieval.}
Given a targeted statement $S$,
we provide the full content of the article preceding this statement to an LLM which is then asked to retrieve all definitions $D$ and assumptions $H$ that would be relevant so that $S$ is self-contained given $D$ and $H$.
Given a sufficiently capable model for long-context retrieval and reasoning, this approach should be able to fully resolve all dependencies that can be resolved from the preprint alone.

\vspace{-\baselineskip}
\paragraph{Vector Retrieval.}
For each targeted statement, a first LLM is asked to 
enumerate the non-trivial \emph{objects} appearing in the statement -- i.e., symbols, structures, or named results that are neither standard nor defined in the statement. We split the content of the article preceding the statement in paragraphs and, for each object, we use a regex to list all paragraphs containing it. 

Separately, we compute dense embeddings of all paragraphs preceding the lemma and the literal query \texttt{"Definition of <object>"} using the \texttt{text-embedding-3-small} model in the OpenAI API and compute cosine similarity scores with ChromaDB. Concretely, we keep the top-$k$ hits ($3 \leq k \leq 12$) whose similarity exceeds a threshold of $\tau=0.75$.

Concatenating all paragraphs identified by the regex and selected by the retriever, for each non-trivial object in the statement, results in a context that can be passed to a second LLM call to extract the exact definitions and assumptions required to make the lemma fully self-contained.

The \textbf{full-context} method requires by design a larger number of tokens than vector retrieval in that it provides a much larger context. In our setup, full-context extraction needed about 5-10 times more tokens than vector retrieval depending on the model. However, it has a higher accuracy in retrieving the assumptions of the different lemmas as presented in the following subsection \ref{ssec: self-containement test}.
\vspace{-0.25\baselineskip}

\subsection{Self-Containedness Judge} \label{ssec: self-containement test}

After lemmas have been complemented with extracted assumptions, we filter for true self-containedness\footnote{From the second iteration of this benchmark in Feb. 2026, we first apply a deterministic filter to ensure that no external citations or undesirable environments are referenced in the extracted assumptions.} and we then use an LLM-as-a-judge as an additional guardrail.
For this, the lemma statement and its extracted assumptions are passed to the LLM with the task of assessing whether with the additional context, the lemma is self-contained, i.e. dependency-free except for notions deemed standard or classical. The model outputs a reasoning trace alongside a binary final judgment.

\vspace{-\baselineskip}
\paragraph{Evaluation} We evaluate this extraction part of the pipeline up-front, settling on optimal choices for the final benchmark used for model evaluations in Section~\ref{sec:results}. Specifically, we conduct human annotations as well as model cross-validations to assess the LLM-based evaluation of self-containedness.

\vspace{-\baselineskip}
\paragraph{Human evaluation} A significant proportion  of the lemmas obtained were reviewed by human experts to check whether they were indeed self-contained and to evaluate the capabilities of the LLM-as-a-judge. 
Depending on the mode, between 75.5\% and 96.5\% of the lemmas vetted by the LLM-as-a-judge were indeed self-contained according to human mathematicians. The full results can be found in Table \ref{tab:eval}.

\begin{table}[htbp]
\centering
\resizebox{\columnwidth}{!}{%
\begin{tabular}{lllcc}
\toprule
\textbf{Extraction model} & \textbf{Mode} & \textbf{Judge model} & \textbf{SC (judge) (\%)} & \textbf{PPV (\%)} \\
\midrule
Gemini 2.5 Pro & Full context     & Gemini 2.5 Pro & 67.4 & 96.5 \\
Gemini 2.5 Pro & Vector retrieval & Gemini 2.5 Pro & 36.9 & 88.4 \\
GPT-5          & Full context     & GPT-5          & 78.5 & 90.0 \\
GPT-5          & Vector retrieval & GPT-5          & 49.4 & 85.0 \\
GPT-4          & Full context     & o3             & 47.6 & 78.1 \\
GPT-4          & Vector retrieval & o3             & 19.4 & 75.5 \\
GPT-4          & Full context     & GPT-4          & 64.4 & 88.2 \\
GPT-4          & Vector retrieval & GPT-4          & 25.8 & 80.2 \\
\bottomrule
\end{tabular}
}
\caption{\textbf{Evaluation of extraction modes and extraction models.} We report self-containedness (SC) proportions for 376 lemmas extracted with both modes and three different models, judged by LLM-as-a-judge using the same model. For GPT-4, we additionally use o3 as a stronger judge to measure the discrepancy. PPV (positive predictive value) is the proportion of lemmas judged self-contained that are confirmed by human evaluators.}
\label{tab:eval}
\end{table}

\paragraph{Cross-model evaluation}
To further evaluate the robustness of self-containedness judgments, we performed a symmetric cross-validation: judging lemmas extracted by GPT-5 and Gemini 2.5 Pro by both models to assess the presence of systematic failure modes that each individual model cannot self-correct. The experiment uses full-context extraction and yields a cross-model confusion matrix for each of the extracted datasets.

\begin{table}[h!]
\centering
\resizebox{\columnwidth}{!}{%
\begin{tabular}{llrrr}
\toprule
\textbf{Extracted by} & \textbf{Judged by} & \textbf{Total} & \textbf{Self-contained} & \textbf{Not self-contained} \\
\midrule
Gemini 2.5 Pro & GPT-5         & 254 & 183 (72\%) & 71 (28\%) \\
\multicolumn{2}{l}{\quad\textit{confirmed by human judge}} & -- & 183 (100\%) & 17 (24\%) \\
\midrule
GPT-5          & Gemini 2.5 Pro & 296 & 293 (99\%) & 3 (1\%) \\
\bottomrule
\end{tabular}
}
\caption{Cross-validation between Gemini 2.5 Pro and GPT-5 for detecting self-contained lemmas. Percentages on the second row refer to confirmation rates within each cell above.}
\label{tab:crossval}
\end{table}

We performed a cross-model validation of mathematical self-containedness between Gemini 2.5 Pro and GPT-5. Among the 254 lemmas that were extracted by Gemini and labeled as self-contained by Gemini, GPT-5 confirmed 72\% and rejected 28\%. Manual inspection showed that 100 \% of those lemmas judged self contained by GPT-5 are truly self contained and 24\% of those judged by GPT5 as not self-contained lemmas were in fact truly self-contained, indicating that GPT-5 acts as a conservative validator, producing almost no false positives but a non-negligible number of false negatives. 
These results show that the two models implement substantially different implicit criteria for self-containedness. Given that GPT-5 seemed to behave as a higher-precision classifier we chose GPT-5 as main extractor for the benchmark.

\paragraph{Results}
Full-context retrieval succeeds in complementing a larger number of lemmas by all required definitions and hypotheses to make them self-contained according to LLM-as-judge evaluation, and human evaluation. 
Accuracy and volume of self contained lemma rightfully retrieved are presented in Figure~\ref{fig:retrieving}. Overall more than $60\%$ of the lemmas identified by the regex in the initial step are successfully self-contained by the pipeline. Therefore we used full-context retrieval by default when establishing the benchmark.

\section{Model Evaluation}
\label{sec:results}

\vspace{-0.5\baselineskip}

\label{sec:gen-judge}
We use the resulting benchmark to evaluate frontier LLMs on research-level lemma proving tasks. Specifically, we use a structured, step-wise format for proof presentation and independent LLM evaluators for judging correctness.

\vspace{-\baselineskip}
\paragraph{Generator.}
The candidate LLM  \emph{prover} is queried to produce a structured proof in numbered steps, with subgoals and the proof explicitly stated.  
We enforce minimal constraints: no invocation of undeclared external references; hypotheses must be cited at the moment of use; known theorems must be clearly announced when applied.  

\vspace{-\baselineskip}
\paragraph{Independent LLM-as-a-judge.}
The lemma, its assumptions and the generated proof are presented to a separate LLM judge, which evaluates the overall proof.

\paragraph{Domain Stratification}
We stratify by domain, following the \texttt{math.*} classification on the arXiv 
(AG, AP, PR, NT, GR, MG, OC, FA, etc.).  
Our dataset was built from articles uploaded 
within a single week. We present here three iterations of the benchmark: one performed in September 2025 corresponding to articles submitted the last week of  August 2025, another one in February 2026 corresponding to articles submitted the first week of February 2026, followed by one batch collecting articles submitted on the last week of April 2026. \footnote{We use the most recent article version regardless of the original submission} .
For each domain, we sampled 1 to 5 articles from the target week, with the exact selection guided by a mix of random choice and manual curation to ensure topical diversity.
Table~\ref{tab:domains} reports the distribution for the lemma extracted from the last week of August 2025. 
\vspace{-0.5\baselineskip}
\paragraph{Evaluation: Metrics, Protocols, and Tables}

To evaluate the pipeline and compare models, we define several metrics covering extraction quality, proof validity, and human assessment.

\paragraph{Main metrics.}
\begin{itemize}
  \item \textbf{Self-Containment Pass} (SC)(\%): percentage of lemmas validated as self-contained 
  (\texttt{self\_contained=1}). This reflects the robustness of the extraction pipeline and coherence judge.  
  In our experiments, values above $60\%$ are considered a satisfactory quality threshold.
  \item \textbf{Precision (PPV) — true self-contained pass (\%):} Proportion of lemmas flagged as self-contained that are confirmed as such by human validation.
  This is the \emph{precision} (positive predictive value) of the self-contained detector, i.e., how much we can trust the set of “pass” lemmas produced by a given method.

  \item \textbf{Proof Acceptance}: proportion of self-contained lemmas that were correctly proven by the LLM, tested at pass@1.
  \item \textbf{Human Confidence Score}: 
  percentage of proof judged correct by the LLM-as-a-judge that are also judged correct by human experts. 
\end{itemize}

For the week of August 2025, we extracted $376$ lemmas.  
On average across models, about $240$ (64\%) of these are judged self-contained after the pipeline. On this benchmark of 240 elements, 7-12.3\% are correctly proven  depending on the model tested.
The evaluation of the human score was performed on 10-20 such proofs per model by expert mathematicians (for a total of 44 proofs reviewed), chosen from reviewers of journals from different fields of mathematics. The proof judged correct by the LLM-as-a-judge were vetted correct by human standards in $67-83\%$ of the cases

\begin{table}[ht]
\centering
\resizebox{\columnwidth}{!}{%
\setlength{\tabcolsep}{3pt} 
\renewcommand{\arraystretch}{1.0}

\begin{tabular}{llcc}
\toprule
\textbf{Source} & \textbf{Gen$\to$Judge} & \textbf{Proof Accept.\ (\%)} & \textbf{Human confidence score (\%)} \\
\midrule
Full-ctx (GPT-5)   & GPT-5$\to$GPT-5      & 12.3 & 83 \\
Full-ctx (Gem-2.5) & Gem-2.5$\to$Gem-2.5  & 7 & 67 \\
Full-ctx (GPT-5)   & DeepSeek-R1$\to$DeepSeek-R1 & 11.9 & 80 \\
\bottomrule
\end{tabular}
}
\vspace{0.3em}
\footnotesize 
\newline 
\caption{\label{tab:results-compact}\textbf{Proof quality - September 2025} by source and generator $\to$ judge for the first iteration (week of August 2025, experiment performed in September 2025).}
\end{table}

For the first week of February 2026, we select 100 preprints from which we extract a total of 677 lemmas appearing in 81 preprints using GPT-5. Out of these, 358 lemmas (52.8\%) were judged as self-contained by GPT-5 after full-context definition and hypothesis extraction by GPT-5 and deterministic filtering. On the resulting benchmark 
106 lemmas (29.6\%) are correctly proven according to the most critical judge (GPT-5) with the best model (also GPT-5). 

For the batch completed on the last week of April, we select 31 preprints for a total of 287 lemmas using GPT-5.5. Out of these, 120 lemmas remain, of which 49 lemmas (40.8\%) are correctly proven according to the most critical judge (GPT-5.5) with the best model (also GPT-5.5).
The results for the latest iteration of the benchmark in April 2026 are shown in Table \ref{tab:results-compact-april-2026-whole-proof} and the evolution with time of the performances of the best model (GPT-5 / GPT-5.5) is shown in figure \ref{fig:evolution}. 
While evaluation of the human score is not yet available for these iterations of the benchmark, with the improving capabilities of general public LLMs in mathematics it seems reasonable to expect that the confidence score would be at least of the order of the one of September 2025.

\begin{table}[t]
\centering
\setlength{\tabcolsep}{3pt}
\footnotesize
\begin{threeparttable}
\begin{tabular}{llrr}
\toprule
\textbf{Generator} & \textbf{Judge} & \textbf{\# Acc.} & \textbf{Rate (\%)} \\
\midrule
GPT-5.5      & GPT-5.5      & \textbf{49}  & \textbf{40.8} \\
             & Gemini 3.1   & 79  & 65.8 \\
             & DeepSeek-R1   & \textbf{62}  & \textbf{51.7} \\
\midrule
Gemini 3.1   & GPT-5.5      & 37  & 30.8 \\
             & Gemini 3.1   & \textbf{83}  & \textbf{69.2} \\
             & DeepSeek-R1   & 53  & 44.2 \\
\midrule
DeepSeek-R1\tnote{a} & GPT-5.5  & 30  & 27.3 \\
             & Gemini 3.1   & 57  & 51.8 \\
             & DeepSeek-R1   & 45  & 40.9 \\
\bottomrule
\end{tabular}
\begin{tablenotes}[flushleft]
\footnotesize
\item[a] DeepSeek-R1 failed to produce a parseable proof for 10 of the 120 lemmas; acceptance rates for this generator are computed over the 110 successfully generated proofs.
\end{tablenotes}
\end{threeparttable}
\caption{\textbf{Proof quality -- Batch of April 2026.} Three LLMs as generators, three as whole-proof judges. All proofs use filtered lemmas extracted by GPT-5.5. Experiment performed in May on lemmas extracted on the last week of April.}
\label{tab:results-compact-april-2026-whole-proof}
\end{table}


\begin{figure}[!t]
\centering
\begin{tikzpicture}
\begin{axis}[
    width=\columnwidth,
    height=5cm,
    xlabel={Batch month},
    ylabel={Correct proofs $(\%)$},
    xtick={1,24,35},
    xticklabels={Aug.\ 2025, Feb.\ 2026, Apr.\ 2026},
    ymin=0,
    ymax=50,
    grid=both,
    mark size=2.5pt,
]
\addplot[thick, mark=*] coordinates {
    (1, 12.3)
    (24, 29.6)
    (35, 40.8)
};
\end{axis}
\end{tikzpicture}
\caption{\label{fig:evolution}\textbf{Evolution of proof quality of the best performing model with time:} GPT-5 in Aug.\ 2025 and Feb.\ 2026, and GPT-5.5 in April 2026, according to the most critical judge (same model).}
\end{figure}

\paragraph{Step-by-step evaluation} We also performed an alternative experiment where the LLM-as-a-judge evaluates each step of the overall proof with a binary verdict (\texttt{True}, \texttt{False}), and labels the whole proof as correct if and only if all steps are correct. This mode of evaluation is likely too strict as the LLM-as-a-judge has a tendency to reject proofs with imprecisions that would be insignificant to humans (see Appendix G table \ref{tab:step-by-step-no-cite} \& \ref{app:gpt5-too-strict} for an example) but can be used as a stricter lower bound and compare the models between each other.
The results on Feb. 2026 can be found in Table \ref{tab:results-compact-2026}. While the opinion of different LLMs judges may differ, GPT-5 is still seen by all judges as the best performing model.

\begin{table}[ht]
\centering
\resizebox{\columnwidth}{!}{%
\setlength{\tabcolsep}{6pt}
\renewcommand{\arraystretch}{1.1}
\begin{tabular}{lllrr}
\toprule
\textbf{Source} & \textbf{Generator} & \textbf{Judge} & \textbf{\# Accepted} & \textbf{Rate (\%)} \\
\midrule
Full-ctx (GPT-5) & GPT-5         & GPT-5           & \textbf{55}  & \textbf{15} \\
                 &               & Gemini 3 pro    & \textbf{67}  & \textbf{19} \\
                 &               & Claude Opus 4.5 & \textbf{125} & \textbf{35} \\
\midrule
Full-ctx (GPT-5) & Gemini 3 pro  & GPT-5           & 12  & 3  \\
                 &               & Gemini 3 pro    & 57  & 16 \\
                 &               & Claude Opus 4.5 & 104 & 29 \\
\midrule
Full-ctx (GPT-5) & Opus 4.5      & GPT-5           & 6   & 2  \\
                 &               & Gemini 3 pro    & 50  & 14 \\
                 &               & Claude Opus 4.5 & 60  & 17 \\
\bottomrule
\end{tabular}
}
\caption{\label{tab:results-compact-2026}\textbf{Proof quality -- February 2026.} Acceptance counts and rates over 358 candidate lemmas, with three LLMs as generators and three LLMs as whole-proof judges. All proofs generated under full-context conditions from lemmas extracted by GPT-5.}
\end{table}

\section{Discussion}
This paper presents a new approach to evaluate LLM mathematical capabilities in natural language. 
It consists in a live benchmark built directly from the latest research-level mathematical results proved by mathematicians on the arXiv. A key advantage is that it reduces the risk of data contamination over time, and avoids treating narrow, convenience-curated problem collections, whether competition exercises or research questions selected primarily for easy verifiability, as a proxy for mathematical research, where, by definition, data is relatively scarce (in contrast with mathematical competition exercises where in-distribution synthetic data has been massively generated in the last two years). 
The price to pay for this generality is the use of an LLM-as-a-judge to evaluate the qualities of the self-contained lemmas and the quality of the proof provided. 
Nevertheless, our evaluation by human mathematicians shows the reliability of LLM-as-a-judge in evaluating both the self-containedness of a statement and the validity of a proof. This can be explained as it is usually easier for a model to evaluate the validity of a proof than to actually produce a proof. 
The theorem proving capabilities of LLMs and their reliability as judges should improve as models get better in reasoning. We believe this is an additional motivation to use the type of benchmark introduced in this article. The quantitative results reported here reveal informative trends, and should be considered preliminary given the limited sample size (a few hundred problems) and the small number of reviewers (a dozen human mathematicians), and as an incitation to design such benchmarks directly grounded in the latest research without restricting to specific problems. We plan to extend the evaluation protocol with more reviewers and more examples to improve the statistical robustness of the analysis.

\section*{Code Availability}
The initial datasets and code are publicly available at
{\faGithub}\hspace{0.3mm}\href{https://github.com/ahayat16/LemmaBench/tree/main/data}{LemmaBench}.

\section*{Acknowledgements}
The authors would like to thank Etienne Bernard, Urbain Vaes, Vincent Boulard, Alexandre Krajenbrinck,  Ilan Zysman, David Renard, Baptiste Seraille, Thomas Cavallazzi, Hector Bouton, Gabriel Synnaeve, Claudio Dappiaggi.\\
This work was granted access to the HPC resources of IDRIS under the allocation 2024-AD011014527R1 made by GENCI. AH and AP were supported by the ANR-Tremplin StarPDE ANR-24-ERCS-0010, the PEPS Project JCJC 2025 from CNRS - INSMI and the Hi! PARIS and ANR/France 2030 program (ANR-23-IACL-0005).

\FloatBarrier
\newpage
\bibliographystyle{icml2026}
\bibliography{biblio.bib}
\clearpage
\newpage 
\appendix

\section{Extended Related Work}
\label{app:relatedworks}
With the increasing capabilities of LLM and LLM-based approaches in mathematics, several benchmarks have been developed in recent years. In natural language, one can cite for instance GSM8k \cite{cobbe2021training}, MathQA \cite{amini2019mathqa} or MATH \cite{hendrycks2021measuring} which provided middle to high-school level questions. These benchmarks were often used to evaluate LLM capabilities to the point of being the staple of evaluation for mathematical abilities. This was until most models were able to reach very high to nearly perfect accuracies \cite{gao2024omni}. These datasets regularly faced debate on contamination issues (see for instance \cite{xu2024benchmarking}).
Other datasets such as
OlymMATH \cite{sun2025challenging}
later proposed problems up to college level coming from multilingual sources of exercises. A very large dataset of competition problems was created by the Numina project \cite{numina_math_datasets}.  OmniMath introduced challenging Olympiad level problems \cite{gao2024omni}. More recent examples like MathOdyssey \cite{fang2025mathodyssey} introduced a benchmark, still on high school to undergraduate level, curated by experts to limit the risk of data contamination. Nevertheless, given the accessibility of the dataset, contamination issues may arise in future generations of LLMs. FrontierMath \cite{frontiermath} had a similar expert-curated problems approach and to avoid any contamination issues, the dataset is purely closed-source. A larger similar benchmark was released with more than 400 problems curated by 64 mathematicians \cite{son2026soohak}, closed-source for the year 2026. Other benchmarks were recently released, such as First Proof \cite{abouzaid2026first} where the statements of ten expert-curated questions with no publicly available solutions are provided during one-week for a one time evaluation of LLMs. Leipzig Tier-4 Benchmark present 102 research-level problems curated by mathematicians in open-source \cite{leipzig_benchmarks_2026}.
RealMath \cite{zhang2025realmath} extract or convert problems from arXiv papers with verifiable answers, so as to focus on real human research too, see also \cite{arxivmath2026} that has a monthly update. While the approach is similar to ours, in particular for \cite{arxivmath2026}\footnote{The first iteration of \cite{arxivmath2026} was released while we were preparing a revision of this manuscript.}, it differs in that it restricts itself to problems that can be inherently formulated with a verifiable answers. This explains why it gives rise to less problems (40 problems over 2 months of arXiv against 400 lemmas for 2 weeks of arXiv) and why the performances of state-of-the-art LLMs differ (60\% on the benchmark of \cite{arxivmath2026} compared to around 15\% for our). EternalMath \cite{ma2026eternalmath} is yet another similar approach where the questions are selected so as to propose an executable benchmark that can be code-verified.

It is worth mentioning that other benchmarks exist in formal language, also focusing on middle school to undergraduate level problems such as miniF2F \cite{zheng2021minif2f}, FIMO \cite{liu2023fimo}, Lean-Workbook \cite{ying2024lean},
PutnamBench \cite{tsoukalas2024putnambench}.

\section{Table of percentage results relative to the total lemmas retrieve for each method (A) and (B) and LLM}

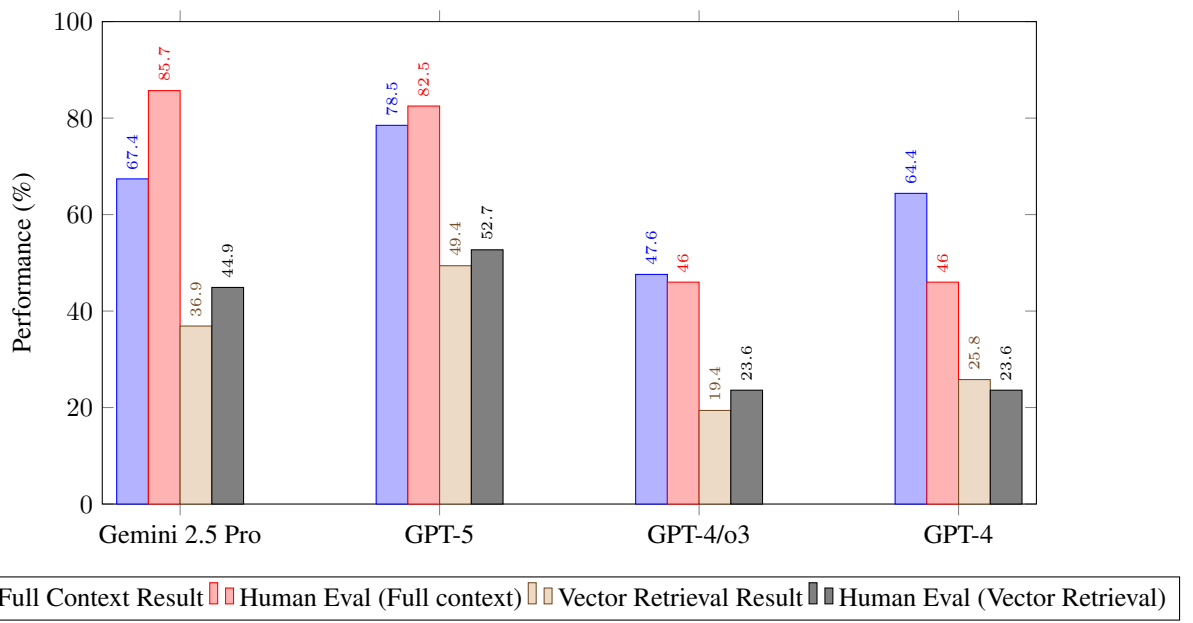
\begin{figure}[h!]
\centering
\begin{tikzpicture}
\begin{axis}[
    ybar=0pt,
    bar width=5pt,
    width=\columnwidth, height=6cm,
    ylabel={Performance (\%)},
    ylabel near ticks,
    symbolic x coords={Gemini 2.5, GPT-5, GPT-4/o3, GPT-4},
    xtick=data,
    x tick label style={font=\footnotesize, rotate=20, anchor=east},
    legend style={at={(0.5,-0.3)}, anchor=north, legend columns=2, font=\footnotesize},
    ymin=0, ymax=100,
    nodes near coords,
    every node near coord/.append style={font=\tiny, rotate=90, anchor=west},
]
\addplot coordinates {(Gemini 2.5,67.4) (GPT-5,78.5) (GPT-4/o3,47.6) (GPT-4,64.4)};
\addplot coordinates {(Gemini 2.5,85.7) (GPT-5,82.5) (GPT-4/o3,46) (GPT-4,46)};
\addplot coordinates {(Gemini 2.5,36.9) (GPT-5,49.4) (GPT-4/o3,19.4) (GPT-4,25.8)};
\addplot coordinates {(Gemini 2.5,44.9) (GPT-5,52.7) (GPT-4/o3,23.6) (GPT-4,23.6)};
\legend{Full Ctx, Human (Full), Vector Ret., Human (Vec.)}
\end{axis}
\end{tikzpicture}
\caption{Percentage results relative to the total lemmas retrieved for both extraction modes (full context / vector retrieval) and models}
\label{fig:retrieving}
\end{figure}

\clearpage
\section{Breakdown of lemmas per domain of mathematics}
\label{app:strat}

\begin{table}[htbp] 

\centering
\begin{tabular}{llrr}
\toprule
\textbf{Domain code} & \textbf{Domain label} & \textbf{\# Articles} & \textbf{\#Lemmas} \\
\midrule
cs.IT   & Computer Science           & 3 & 24 \\
math-ph & Mathematical Physics       & 2 & 36 \\
math.AP & Analysis of PDEs           & 3 & 46 \\
math.CO & Combinatorics              & 5 & 60 \\
math.CT & Category Theory            & 1 & 22 \\
math.DG & Differential Geometry      & 1 & 9  \\
math.DS & Dynamical Systems          & 2 & 18 \\
math.FA & Functional Analysis        & 1 & 15 \\
math.GR & Group Theory               & 1 & 5  \\
math.GT & Geometric Topology         & 1 & 6  \\
math.MG & Metric Geometry            & 1 & 2  \\
math.NA & Numerical Analysis         & 2 & 13 \\
math.NT & Number Theory              & 2 & 2  \\
math.OC & Optimization and Control   & 4 & 19 \\
math.PR & Probability                & 3 & 51 \\
math.QA & Quantum Algebra            & 1 & 2  \\
math.RT & Representation Theory      & 1 & 16 \\
math.SG & Symplectic Geometry        & 1 & 2  \\
math.ST & Statistics Theory          & 2 & 28 \\
\bottomrule
\end{tabular}
\caption{Domain distribution in the dataset (one-week snapshot, August).}
\label{tab:domains}
\end{table}

\section{System Overview: Directed Acyclic Graph of the Pipeline}

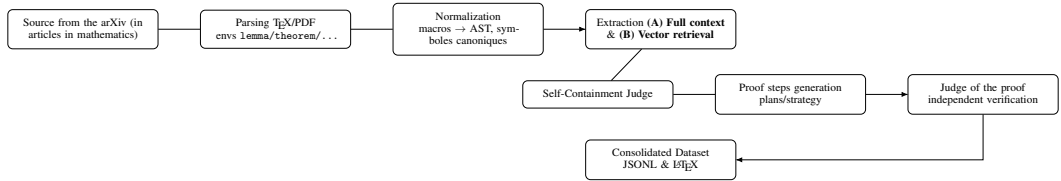
\begin{figure}[ht]
\centering
\resizebox{\linewidth}{!}{%
\begin{tikzpicture}[
  node distance=9mm and 12mm,
  box/.style={draw, rounded corners, align=center, inner sep=2.5mm, minimum width=35mm, text width=38mm},
  every node/.style={font=\footnotesize}
]
\node[box] (harvest) {Source from the arXiv (in articles in mathematics)};
\node[box, right=of harvest] (parse) {Parsing \TeX/PDF \\ \footnotesize envs \texttt{lemma/theorem/...}};
\node[box, right=of parse] (normalize) {Normalization \\ \footnotesize macros $\to$ AST, symboles canoniques};
\node[box, right=of normalize] (extract) {Extraction \textbf{(A) Full context} \\ \footnotesize \& \textbf{(B) Vector retrieval}};
\node[box, below=of extract, xshift=-18mm] (scjudge) {Self-Containment Judge \\};
\node[box, right=of scjudge] (gen) {Proof steps generation \\ \footnotesize plans/strategy};
\node[box, right=of gen] (judge) {Judge of the proof\\ \footnotesize independent verification};
\node[box, below=of scjudge, xshift=18mm] (ds) {Consolidated Dataset \\ \footnotesize JSONL \& \LaTeX};

\draw[-{Latex}] (harvest)--(parse)--(normalize)--(extract);
\draw[-{Latex}] (extract)--(scjudge)--(gen)--(judge);
\draw[-{Latex}] (judge) |- (ds);
\end{tikzpicture}%
}
\caption{DAG : input $\to$ parsing $\to$ normalization $\to$ extraction (double mode) $\to$ self contained $\to$ proof steps $\to$ judge proofs  $\to$ dataset.}

\end{figure}

\clearpage

\section{Algorithms (Pseudo-Code)}

\begin{algorithm}[htbp]

\caption{Extraction \& Self-Containment (Full context / vector retrieval)}
\begin{algorithmic}[1]
\REQUIRE Article \TeX/PDF, mode $\in \{\texttt{full\_context}, \texttt{vector\_retrieval}\}$
\FOR{each \texttt{lemma} environment detected}
  \STATE $S \gets \textbf{statement}(\texttt{lemma})$
  \IF{mode = full\_context}
    \STATE $C \gets$ all paragraphs before lemma
  \ELSE
    \STATE $C \gets \textbf{retrieve\_topk}(S)$ paragraphs \COMMENT{ embeddings}
  \ENDIF
  \STATE $(\hat H, \hat D) \gets \textbf{LLM\_extract}(S,C)$
  \STATE $ok \gets \textbf{SelfContainmentChecks}(S,\hat H,\hat D)$
  \IF{$ok$}
    \STATE \textbf{emit} $(S,\hat H,\hat D,\texttt{self\_contained}=1)$
  \ELSE
    \STATE \textbf{emit} $(S,\hat H,\hat D,\texttt{self\_contained}=0)$
  \ENDIF
\ENDFOR
\end{algorithmic}
\end{algorithm}

\begin{algorithm}[htbp]
\caption{Step Generation and Judgment}
\begin{algorithmic}[1]
\REQUIRE $(S,H,D)$ with \texttt{self\_contained} $=1$
\STATE $\mathrm{steps} \gets \textbf{LLM\_prove}(S,H,D)$
\STATE $\mathrm{verdict} \gets \textbf{LLM\_judge\_proof}(\mathrm{whole\text{ } proof},H,D)$

\IF{$\mathrm{verdict\_ok}$}
  \STATE \textbf{emit} $\mathrm{steps}, \mathrm{verdict}$
\ELSE
  \STATE \textbf{emit} $\mathrm{steps}, \mathrm{verdict}$ \COMMENT{partial or rejected}
\ENDIF
\end{algorithmic}
\end{algorithm}

\vspace{1cm}

\section{Example of a retrieved self contained lemma, with proof generated by gpt-5}
\label{app:examples}
\subsection*{Assumptions}

\begin{itemize}
\item $L^2_{\alpha-1}(0,T;U):=\{u:(0,T)\to U\ \text{measurable}\ |\ \int_{0}^{T}(T-s)^{\alpha-1}\|u(s)\|_{U}^{2}\,ds<\infty\}$, with inner product
\[
\langle u,v\rangle_{L^2_{\alpha-1}(0,T;U)}
:=\int_{0}^{T}(T-s)^{\alpha-1}\langle u(s),v(s)\rangle_{U}\,ds.
\]
Assumptions: $\alpha\in(0,1)$, $T>0$, $U=\mathbb{R}^m$.

\item $L^{2}(\mathfrak{S},H;\mu):=\{f:\mathfrak{S}\to H\ \text{measurable}\ |\ \int_{\mathfrak{S}}\|f(\sigma)\|_{H}^{2}\,d\mu(\sigma)<\infty\}$.  
Assumptions: $(\mathfrak{S},\mathcal{F},\mu)$ is a probability space, $H=\mathbb{R}^{n}$.

\item $\mathfrak{L}_{T}:L^2_{\alpha-1}(0,T;U)\to L^{2}(\mathfrak{S},H;\mu)$ defined by
\begin{multline*}
(\mathcal{L}_{T}u)(\sigma):=\int_{0}^{T}(T-s)^{\alpha-1}E_{\alpha,\alpha}\big((T-s)^{\alpha}A(\sigma)\big)\,B(\sigma)\,u(s)\,ds \\
=\int_{0}^{T}\mathcal{K}_{T}(\sigma,s)\,u(s)\,ds,
\end{multline*}
with kernel
\[
\mathcal{K}_{T}(\sigma,s):=(T-s)^{\alpha-1}E_{\alpha,\alpha}\big((T-s)^{\alpha}A(\sigma)\big)B(\sigma)\in\mathbb{R}^{n\times m},
\]
and two-parameter Mittag-Leffler matrix function
\[
E_{\alpha,\alpha}(M):=\sum_{k=0}^{\infty}\frac{M^{k}}{\Gamma(\alpha k+\alpha)}.
\]
\end{itemize}

\begin{lemma} \label{Lemma: Hilbert Schmidt operator}
$\mathcal{L}_{T}: L^2_{\alpha-1}(0,T;U)\rightarrow L^{2}(\mathfrak{S},H;\mu)$ is a Hilbert--Schmidt operator.
\end{lemma}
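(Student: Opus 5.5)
The plan is to reduce the claim to the standard criterion that an integral operator between $L^2$ spaces is Hilbert--Schmidt if and only if its kernel is square integrable on the product space. The weighted space $L^2_{\alpha-1}(0,T;U)$ is not literally of this form, so I first unweight it.

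\textbf{Unweighting.} Define
\[
(\Phi u)(s):=(T-s)^{\frac{\alpha-1}{2}}u(s).
\]
This is a unitary map $\Phi: L^2_{\alpha-1}(0,T;U)\to L^2(0,T;U)$, where $L^2(0,T;U)$ carries Lebesgue measure. Since composing a Hilbert--Schmidt operator with a unitary map preserves the Hilbert--Schmidt property (and the norm), it suffices to show that $\widetilde{\mathcal{L}}_T:=\mathcal{L}_T\Phi^{-1}$ is Hilbert--Schmidt from $L^2(0,T;U)$ to $L^2(\mathfrak{S},H;\mu)$.

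\textbf{New kernel.} Writing $u=\Phi^{-1}v$, we get
\[
(\widetilde{\mathcal{L}}_T v)(\sigma)=\int_0^T \widetilde{\mathcal{K}}(\sigma,s)\,v(s)\,ds,
\qquad
\widetilde{\mathcal{K}}(\sigma,s):=(T-s)^{\frac{\alpha-1}{2}}E_{\alpha,\alpha}\big((T-s)^\alpha A(\sigma)\big)B(\sigma).
\]
For a matrix-valued kernel acting between finite-dimensional Euclidean spaces, the Hilbert--Schmidt norm satisfies
\[
\|\widetilde{\mathcal{L}}_T\|_{HS}^2=\int_{\mathfrak{S}}\int_0^T\|\widetilde{\mathcal{K}}(\sigma,s)\|_F^2\,ds\,d\mu(\sigma).
\]
This follows by expanding in the orthonormal basis $e_j\otimes\varphi_k$ of $L^2(0,T;U)$, where $\{e_j\}$ is a basis of $U$ and $\{\varphi_k\}$ a basis of $L^2(0,T)$, and applying Parseval in $s$ for each fixed $\sigma$.

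\textbf{Bounding the kernel.} By the triangle inequality on the absolutely convergent Mittag-Leffler series and submultiplicativity of the operator norm,
\[
\|E_{\alpha,\alpha}(M)\|\le E_{\alpha,\alpha}(\|M\|),
\]
and the scalar $E_{\alpha,\alpha}$ is increasing on $[0,\infty)$. Hence for $s\in(0,T)$,
\[
\|\widetilde{\mathcal{K}}(\sigma,s)\|_F^2\le m\,(T-s)^{\alpha-1}\,E_{\alpha,\alpha}\big(T^\alpha\|A(\sigma)\|\big)^2\,\|B(\sigma)\|^2 .
\]
Since $\int_0^T(T-s)^{\alpha-1}ds=T^\alpha/\alpha<\infty$ (here $\alpha>0$ is used), Tonelli gives
\[
\|\widetilde{\mathcal{L}}_T\|_{HS}^2\le \frac{mT^\alpha}{\alpha}\int_{\mathfrak{S}}E_{\alpha,\alpha}\big(T^\alpha\|A(\sigma)\|\big)^2\|B(\sigma)\|^2\,d\mu(\sigma).
\]
This is finite as soon as $A$ and $B$ are measurable and bounded, since $\mu$ is a probability measure. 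It is also finite under the weaker condition that the last integrand is $\mu$-integrable.

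\textbf{Main obstacle.} The analysis itself is routine; the difficulty is the hypotheses. The extracted statement never specifies $A:\mathfrak{S}\to\mathbb{R}^{n\times n}$ and $B:\mathfrak{S}\to\mathbb{R}^{n\times m}$. Without measurability, the map $\sigma\mapsto(\mathcal{L}_T u)(\sigma)$ need not even be measurable. Without a bound such as $\sup_\sigma(\|A(\sigma)\|+\|B(\sigma)\|)<\infty$, the integral over $\mathfrak{S}$ can diverge, because $E_{\alpha,\alpha}$ grows exponentially. So I would state these standing assumptions explicitly, presumably the ones in the source paper: $A$ and $B$ measurable and essentially bounded. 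I would also check measurability of $\widetilde{\mathcal{K}}$ jointly in $(\sigma,s)$, which follows from continuity of $E_{\alpha,\alpha}$ and composition with measurable maps. Once this is settled, the bound above completes the proof.
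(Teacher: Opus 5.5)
Your proof is correct and is essentially the paper's argument. The paper absorbs the weight into the finite measure $d\nu(s)=(T-s)^{\alpha-1}\,ds$, so that $L^2_{\alpha-1}(0,T;U)=L^2((0,T),\nu;U)$ with the bounded kernel $E_{\alpha,\alpha}((T-s)^\alpha A(\sigma))B(\sigma)$; your unitary $\Phi$ is the same change of measure done on the function side, and both proofs then bound the Mittag-Leffler series termwise and use $\int_0^T (T-s)^{\alpha-1}\,ds = T^\alpha/\alpha$ in the square-integrable-kernel criterion. You were also right to flag measurability and boundedness of $A$ and $B$: the paper's proof silently relies on $\operatorname{ess\,sup}_\sigma \|A(\sigma)\|<\infty$ and $\operatorname{ess\,sup}_\sigma\|B(\sigma)\|_{HS}<\infty$, and neither appears among the extracted assumptions.
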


\bigskip
\hrule
\bigskip

\subsection{Proof steps}
\textbf{Identification of the domain as a standard $L^2$-space.}  
Define the finite measure $\nu$ on $(0,T)$ by $d\nu(s) := (T-s)^{\alpha-1}\,ds$. Then
\[
L^2_{\alpha-1}(0,T;U) = L^2((0,T),\nu;U).
\]
With this notation the operator reads, for $u \in L^2((0,T),\nu;U)$,
\begin{multline*}
(\mathcal{L}_T u)(\sigma) = \int_0^T (T-s)^{\alpha-1} E_{\alpha,\alpha}((T-s)^\alpha A(\sigma)) B(\sigma) u(s)\,ds
\\ = \int_0^T K_T(\sigma,s) u(s)\,ds,
\end{multline*}
where $K_T(\sigma,s) = (T-s)^{\alpha-1} E_{\alpha,\alpha}((T-s)^\alpha A(\sigma)) B(\sigma)$.

\textbf{Rewriting as an integral operator with respect to $\nu$.}  
Since $d\nu(s) = (T-s)^{\alpha-1} ds$, we have $ds = (T-s)^{1-\alpha} d\nu(s)$. Therefore
\[
(\mathcal{L}_T u)(\sigma) = \int_0^T \bigl[E_{\alpha,\alpha}((T-s)^\alpha A(\sigma)) B(\sigma)\bigr] u(s)\,d\nu(s).
\]
Thus $\mathcal{L}_T$ is the integral operator with kernel
\[
\tilde K(\sigma,s) := E_{\alpha,\alpha}((T-s)^\alpha A(\sigma)) B(\sigma) \in \mathbb{R}^{n\times m}.
\]

\textbf{Uniform bound on the Mittag-Leffler matrix factor.}  
Let $M_A := \operatorname{ess\,sup}_{\sigma\in\mathfrak{S}} \|A(\sigma)\|$. Using the series expansion,
\[
\|E_{\alpha,\alpha}((T-s)^\alpha A(\sigma))\|
\le \sum_{k=0}^\infty \frac{(T^\alpha M_A)^k}{\Gamma(\alpha k+\alpha)} =: C_E < \infty.
\]

\textbf{L2-integrability of the kernel $\tilde K$.}  
Identify $HS(U,H)$ with $\mathbb{R}^{n\times m}$ with Frobenius norm. Then
\[
\|\tilde K(\sigma,s)\|_{HS} \le C_E \|B(\sigma)\|_{HS}.
\]
Let $M_B := \operatorname{ess\,sup}_{\sigma\in\mathfrak{S}} \|B(\sigma)\|_{HS} < \infty$. Then
\[
\int_{\mathfrak{S}} \int_0^T \|\tilde K(\sigma,s)\|_{HS}^2\,d\nu(s)\,d\mu(\sigma)
\le C_E^2 \frac{T^\alpha}{\alpha} M_B^2 < \infty.
\]

\textbf{Hilbert--Schmidt property.}  
By the standard result, $\mathcal{L}_T$ is Hilbert--Schmidt and
\[
\|\mathcal{L}_T\|_{HS}^2 \le \frac{T^\alpha}{\alpha} C_E^2 M_B^2.
\]

\textbf{Conclusion.}  
$\mathcal{L}_T: L^2_{\alpha-1}(0,T;U) \to L^2(\mathfrak{S},H;\mu)$ is Hilbert--Schmidt, with
\[
\|\mathcal{L}_T\|_{HS}^2 \le \frac{T^\alpha}{\alpha} C_E^2 \int_{\mathfrak{S}} \|B(\sigma)\|_{HS}^2\,d\mu(\sigma).
\]

\section{Step-by-step judging}
\begin{table}[ht]
  \centering
  \begin{tabular}{lcc}
    \toprule
    Proof model & Valid proofs & Pass rate \\
    \midrule
    gpt-5.5         & 25/120 & 20.8\% \\
    gemini-3.1-pro  & 10/120 &  8.3\% \\
    gpt-5.4-mini    &  4/120 &  3.3\% \\
    \bottomrule
  \end{tabular}
  \caption{Step-by-step proof evaluation (judge: gpt-5.5), on (120 lemmas) judged self contained on the batch of April 2026. A proof passes if all of its steps are judged correct.}
  \label{tab:step-by-step-no-cite}
\end{table}

\section{Over-strictness of the harshest judge when judging step-by-step}
\label{app:gpt5-too-strict}

Across our Feb.~2026 batch, the three judges differ sharply in leniency when judging the proof step-by-step. Aggregated over
all proofs each inspects, \textsc{Opus-4.5} accepts $26.2\%$, \textsc{Gemini-3} accepts
$15.8\%$, and \textsc{GPT-5} accepts only $6.3\%$: \textbf{\textsc{GPT-5} is by far the
strictest judge}, and it is the model most often in the minority when judges disagree. The
example below is representative: \textsc{GPT-5} rejects a \emph{correct} proof because one
step contains a single incorrect \emph{parenthetical aside} that plays no role in the
argument---a flaw \textsc{GPT-5} itself describes as ``not used later''---whereas
\textsc{Gemini-3} and \textsc{Opus-4.5} accept the proof.\footnote{Lemma and proof
extracted from arXiv:2602.05756 (Feb.\ 2026); the proof was generated by \textsc{GPT-5}.}

\paragraph{The lemma.}
Let $(X,G)$ and $(Y,G)$ be equicontinuous minimal actions of a discrete group $G$ on
compact Hausdorff spaces, with uniformities $\mathbb{U}_X,\mathbb{U}_Y$ given by the
neighbourhoods of the diagonals. For an entourage $\delta$ write
$\delta[z]:=\{z' : (z',z)\in\delta\}$, and for a point $z$ and a set $B$ write the
hitting-time set $G(z,B):=\{g\in G : g.z\in B\}$.

\begin{lemma}\label{lem:factor}
For $x\in X$ and $y\in Y$, there exists a factor map $\pi\colon X\to Y$ with $\pi(x)=y$ if
and only if for every neighbourhood $V$ of $y$ there is a neighbourhood $U$ of $x$ with
$G(x,U)\subseteq G(y,V)$.
\end{lemma}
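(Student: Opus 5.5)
The forward direction is immediate from continuity and equivariance of a factor map. Suppose $\pi$ is a factor map with $\pi(x)=y$, and let $V$ be a neighbourhood of $y$. Put $U:=\pi^{-1}(V)$, which is a neighbourhood of $x$. If $g\in G(x,U)$, then $g.x\in U$, so $\pi(g.x)=g.\pi(x)=g.y\in V$, i.e. $g\in G(y,V)$. Hence $G(x,U)\subseteq G(y,V)$.

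For the converse, the plan is to define $\pi$ on the orbit $G.x$ by $\pi(g.x):=g.y$ and extend by uniform continuity to $\overline{G.x}=X$; the latter equality holds by minimality of $(X,G)$.

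\emph{Well-definedness.} Suppose $g.x=h.x$. Then $h^{-1}g$ fixes $x$, so $h^{-1}g\in G(x,U)$ for every neighbourhood $U$ of $x$. The hypothesis then gives $h^{-1}g\in G(y,V)$ for every neighbourhood $V$ of $y$. Since $Y$ is Hausdorff, $h^{-1}g.y=y$, and therefore $g.y=h.y$.

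\emph{Uniform continuity on the orbit.} This is the key step, and the one where equicontinuity is used. Fix an entourage $\epsilon\in\mathbb{U}_Y$ and pick a symmetric $\epsilon'$ with $\epsilon'\circ\epsilon'\subseteq\epsilon$.
\begin{enumerate}
\item Equicontinuity of $Y$ gives $\eta\in\mathbb{U}_Y$ with $(gz,gz')\in\epsilon'$ for all $g\in G$ whenever $(z,z')\in\eta$.
\item Apply the hypothesis with $V:=\eta[y]$ to get a neighbourhood $U$ of $x$ with $G(x,U)\subseteq G(y,V)$.
\item Choose $\delta\in\mathbb{U}_X$ symmetric with $\delta[x]\subseteq U$. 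Equicontinuity of $X$ gives $\delta'$ such that $(z,z')\in\delta'$ implies $(gz,gz')\in\delta$ for all $g$.
\end{enumerate}
Now suppose $(g.x,h.x)\in\delta'$. Then $(h^{-1}g.x,\,x)\in\delta$, so $h^{-1}g\in G(x,U)\subseteq G(y,V)$, and hence $(h^{-1}g.y,\,y)\in\eta$. Applying $h$ and using step 1 yields $(g.y,h.y)\in\epsilon'\subseteq\epsilon$. So $\pi$ is uniformly continuous on $G.x$.

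\emph{Extension.} The orbit $G.x$ is dense in $X$, and $Y$ is compact, hence complete. A uniformly continuous map from a dense subset into a complete Hausdorff uniform space extends uniquely to a uniformly continuous map $\pi\colon X\to Y$.

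\emph{Equivariance.} The identity $\pi(k.z)=k.\pi(z)$ holds on the orbit by construction. Both sides are continuous in $z$, so it extends to all of $X$ by density.

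\emph{Surjectivity.} The image $\pi(X)$ is compact, hence closed, and it contains $G.y$. By minimality of $(Y,G)$, $G.y$ is dense in $Y$, so $\pi(X)=Y$.

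This gives a factor map $\pi\colon X\to Y$ with $\pi(x)=y$. The main obstacle is the uniform continuity step: the hypothesis only controls return times to neighbourhoods of $x$ itself, and equicontinuity on both sides is what upgrades this to uniform control between arbitrary pairs of orbit points.
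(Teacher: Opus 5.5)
Your proof is correct and follows essentially the same route as the paper's (GPT-5-generated) proof: the stabilizer inclusion makes $g.x\mapsto g.y$ well defined on the dense orbit, and equicontinuity of both actions (applied with $h^{-1}$ on $X$ and with $h$ on $Y$) gives uniform continuity on $G.x$; extension to $X$, equivariance and surjectivity then follow from completeness of $Y$, density, compactness and minimality. Your version also avoids the paper's unjustified (though unused) aside $V\times V\subseteq\delta_Y$, and your condition $\epsilon'\circ\epsilon'\subseteq\epsilon$ is harmless but unnecessary, since you only use $\epsilon'\subseteq\epsilon$.
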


\paragraph{The candidate proof (generated by GPT-5).}
The model produced the proof below, which we submitted to each judge one step at a time.
The single questionable clause---an unjustified, and ultimately unused, assertion in
Step~3---is \hlclause{highlighted}.

{\bfseries\small Candidate proof of Lemma~\ref{lem:factor} (generated by \textsc{GPT-5}).}\par\smallskip
\small
\textbf{Step 0.} We recall the setting. Let $G$ be a discrete group. Let $(X,G)$ and $(Y,G)$ be equicontinuous minimal actions on compact Hausdorff spaces, with $g.x:=\alpha_X(g)(x)$ and $g.y:=\alpha_Y(g)(y)$. Denote by $\mathbb{U}_X$ and $\mathbb{U}_Y$ the uniformities given by the neighborhoods of the diagonals in $X^2$ and $Y^2$, respectively. For a minimal action $(Z,G)$, a point $z\in Z$, and $B\subseteq Z$, define the hitting times set
\[
G(z,B):=\{g\in G;\; g.z\in B\}.
\]
We prove:

Lemma. Let $x\in X$ and $y\in Y$. There exists a factor map $\pi\colon X\to Y$ with $\pi(x)=y$ if and only if for any neighborhood $V$ of $y$ there exists a neighborhood $U$ of $x$ such that $G(x,U)\subseteq G(y,V)$.\par\smallskip
\textbf{Step 1.} Proof of the easy direction ($\Rightarrow$). Assume there is a factor map $\pi\colon X\to Y$ with $\pi(x)=y$, i.e., $\pi$ is continuous, surjective, and equivariant: $\pi(g.x)=g.\pi(x)$ for all $g\in G$, $x\in X$.

Let $V\subseteq Y$ be a neighborhood of $y$. By continuity of $\pi$ at $x$, there exists a neighborhood $U\subseteq X$ of $x$ such that $\pi(U)\subseteq V$. If $g\in G(x,U)$, then $g.x\in U$, so $\pi(g.x)\in V$. By equivariance,
\[
\pi(g.x)=g.\pi(x)=g.y\in V,
\]
hence $g\in G(y,V)$. Therefore $G(x,U)\subseteq G(y,V)$.\par\smallskip
\textbf{Step 2.} Proof of the converse ($\Leftarrow$): stabilizer inclusion and definition on the orbit. Assume that for every neighborhood $V$ of $y$ there exists a neighborhood $U$ of $x$ with
\[
G(x,U)\subseteq G(y,V).
\]
First, we show the stabilizer inclusion $\operatorname{Stab}_G(x)\subseteq \operatorname{Stab}_G(y)$.

Indeed, if $h\in G$ satisfies $h.x=x$, then for any neighborhood $V$ of $y$, choose $U$ given by the hypothesis. Since $h.x=x\in U$, we have $h\in G(x,U)$, and thus $h\in G(y,V)$, so $h.y\in V$. As this holds for every neighborhood $V$ of $y$, it follows that $h.y=y$. Hence
\[
\operatorname{Stab}_G(x)\subseteq \operatorname{Stab}_G(y).
\]

Define a map $\varphi$ on the dense orbit $G.x\subseteq X$ by
\[
\varphi(g.x):=g.y\quad\text{for }g\in G.
\]
This is well-defined: if $g_1.x=g_2.x$, then $g_2^{-1}g_1\in \operatorname{Stab}_G(x)\subseteq \operatorname{Stab}_G(y)$, hence $g_1.y=g_2.y$.

Moreover, $\varphi$ is $G$-equivariant on $G.x$: for $h\in G$,
\[
\varphi\bigl(h.(g.x)\bigr)=\varphi((hg).x)=(hg).y=h.(g.y)=h.\varphi(g.x).
\]\par\smallskip
\textbf{Step 3.} Uniform continuity of $\varphi$ on $G.x$. We show that $\varphi\colon G.x\to Y$ is uniformly continuous with respect to the uniformities $\mathbb{U}_X$ and $\mathbb{U}_Y$ restricted to $G.x$ and $Y$, respectively.

Fix $\varepsilon\in \mathbb{U}_Y$. By equicontinuity of $(Y,G)$, there exists $\delta_Y\in \mathbb{U}_Y$ such that for all $(y_1,y_2)\in \delta_Y$ and all $r\in G$,
\[
(r.y_1,r.y_2)\in \varepsilon.
\]
Let $V:=\delta_Y[y]=\{y'\in Y;\; (y',y)\in \delta_Y\}$, a neighborhood of $y$ with \hlclause{$V\times V\subseteq \delta_Y$}.

By the hypothesis, choose a neighborhood $U$ of $x$ such that $G(x,U)\subseteq G(y,V)$.

Choose an entourage $E\in \mathbb{U}_X$ with $E[x]\subseteq U$. By equicontinuity of $(X,G)$, there exists $\eta_X\in \mathbb{U}_X$ such that for all $(u,v)\in \eta_X$ and all $r\in G$,
\[
(r.u,r.v)\in E.
\]

We claim: if $g,h\in G$ satisfy $(g.x,h.x)\in \eta_X$, then $(\varphi(g.x),\varphi(h.x))=(g.y,h.y)\in \varepsilon$.

Indeed, applying $r=h^{-1}$ to $(g.x,h.x)\in \eta_X$, we get
\[
\bigl(h^{-1}g.x,\,h^{-1}h.x\bigr)=\bigl(h^{-1}g.x,\,x\bigr)\in E,
\]
thus $h^{-1}g.x\in E[x]\subseteq U$, i.e., $h^{-1}g\in G(x,U)\subseteq G(y,V)$. Hence $h^{-1}g.y\in V$, which means $(h^{-1}g.y,y)\in \delta_Y$. Applying $r=h$ and the choice of $\delta_Y$, we obtain
\[
(g.y,h.y)=\bigl(h.(h^{-1}g.y),\,h.y\bigr)\in \varepsilon.
\]
This establishes the desired uniform continuity of $\varphi$ on $G.x$.\par\smallskip
\textbf{Step 4.} Extension to a continuous map $\pi\colon X\to Y$. Since $G.x$ is dense in $X$ (by minimality) and $Y$ is compact Hausdorff (hence complete and Hausdorff as a uniform space), the uniformly continuous map $\varphi\colon G.x\to Y$ extends uniquely to a uniformly continuous (hence continuous) map
\[
\pi\colon X\longrightarrow Y.
\]
By construction, $\pi|_{G.x}=\varphi$, so in particular $\pi(x)=\varphi(x)=y$.\par\smallskip
\textbf{Step 5.} Equivariance and surjectivity of $\pi$. For each fixed $h\in G$, the maps $X\to Y$, $z\mapsto \pi(h.z)$ and $z\mapsto h.\pi(z)$ are continuous and agree on the dense set $G.x$ (since $\varphi$ is equivariant there). Hence they agree on all of $X$, and thus $\pi$ is $G$-equivariant:
\[
\pi(h.z)=h.\pi(z)\quad\text{for all }h\in G,\; z\in X.
\]

Finally, $\pi$ is surjective. Indeed, for any $g\in G$, $\pi(g.x)=g.\pi(x)=g.y$, so $\pi(G.x)\supseteq G.y$. Since $(Y,G)$ is minimal, $\overline{G.y}=Y$. The image $\pi(X)$ is compact (hence closed) and contains the dense subset $G.y$, so $\pi(X)=Y$.

Thus $\pi$ is a factor map with $\pi(x)=y$.\par\smallskip
\textbf{Step 6.} Conclusion. We have shown:
- If $\pi\colon X\to Y$ is a factor map with $\pi(x)=y$, then for every neighborhood $V$ of $y$ there exists a neighborhood $U$ of $x$ such that $G(x,U)\subseteq G(y,V)$.
- Conversely, if for every neighborhood $V$ of $y$ there exists a neighborhood $U$ of $x$ with $G(x,U)\subseteq G(y,V)$, then there exists a continuous, equivariant, surjective map $\pi\colon X\to Y$ with $\pi(x)=y$.

This completes the proof of Lemma \ref{lem:factor}.\par\smallskip

\paragraph{How the three judges scored it.}
\textsc{Gemini-3} and \textsc{Opus-4.5} accept all seven steps and accept the proof;
\textsc{GPT-5} accepts six and rejects \emph{Step 3}, hence rejects the whole proof
(Table~\ref{tab:gpt5-disagreement}).

\begin{table}[h]
\centering\small
\begin{tabular}{lccc}
\toprule
& \textsc{Gemini-3} & \textsc{GPT-5} & \textsc{Opus-4.5} \\
\midrule
Step 0 (setup)                       & \cmark & \cmark & \cmark \\
Step 1 ($\Rightarrow$ direction)     & \cmark & \cmark & \cmark \\
Step 2 ($\Leftarrow$: stabilizers)   & \cmark & \cmark & \cmark \\
Step 3 (uniform continuity)          & \cmark & \xmark & \cmark \\
Step 4 (extension to $X$)            & \cmark & \cmark & \cmark \\
Step 5 (equivariance, surjectivity)  & \cmark & \cmark & \cmark \\
Step 6 (conclusion)                  & \cmark & \cmark & \cmark \\
\midrule
\textbf{Overall verdict} & \textcolor{accept}{\textbf{accept}} & \textcolor{reject}{\textbf{reject}} & \textcolor{accept}{\textbf{accept}} \\
\bottomrule
\end{tabular}
\caption{Per-step verdicts on the \emph{same} proof of Lemma~\ref{lem:factor}. The only
disagreement is Step~3, which \textsc{GPT-5} alone rejects.}
\label{tab:gpt5-disagreement}
\end{table}

\paragraph{Why the rejection is unwarranted.}
The highlighted clause $V\times V\subseteq\delta_Y$ need not hold for an arbitrary entourage
$\delta_Y$, so it is indeed unjustified. \textbf{But it is never used.} In Step~3, $V$ is
used only through the hypothesis (to obtain $U$ with $G(x,U)\subseteq G(y,V)$), and the
entourage $\delta_Y$ is applied \emph{directly} via equicontinuity at the end; deleting the
clause leaves Steps~3--5 unchanged. The proof is therefore correct, as \textsc{Gemini-3} and
\textsc{Opus-4.5} confirm.

\paragraph{The judge concedes the proof is correct, then rejects it.}
\textsc{GPT-5}'s verbatim rationale grants that the \emph{main argument is correct} and that
the offending clause is \emph{not used later}, yet converts this immaterial aside into a
rejection of the entire proof:

{\bfseries\small \textsc{GPT-5} \textemdash{} Step 3 \textemdash{} emitted verdict: \texttt{incorrect}}\par\vspace{3pt}
\small\itshape
\emph{The main argument establishing uniform continuity is correct}, but the claim that
$V := \delta_Y[y]$ satisfies $V\times V \subset \delta_Y$ is generally false and
unjustified. $\delta_Y[y]$ is a neighborhood of $y$, but need not have
$V\times V \subset \delta_Y$. \textbf{Although this inclusion is not used later}, the step
contains an incorrect assertion, so it must be marked incorrect.

\paragraph{Takeaway.}
This failure differs from a judge that simply miscalculates: the local observation is
correct---the parenthetical clause really is unjustified---but the step-level protocol has no
notion of \emph{materiality}. A single incorrect aside that contributes nothing to the
deduction fails the step, and one failed step fails the whole proof. Holistic judging, which
weighs whether the proof \emph{as a whole} establishes the claim, does not exhibit this
brittleness. Because \textsc{GPT-5} is both the strictest judge and the most frequent lone
rejecter, this bias disproportionately depresses the accept rates it reports; cross-judge
disagreement is common, with split overall verdicts on $20\text{--}35\%$ of proofs depending
on the prover.

\end{document}